\titleformat*{\paragraph}{\itshape}
\definecolor{ForestGreen}{HTML}{009B55}
\definecolor{BrickRed}{HTML}{B6321C}
\newcommand{\textgreek}[1]{\begingroup\fontencoding{LGR}\selectfont#1\endgroup}
\DeclareMathOperator*{\E}{\mathbb{E}}
\definecolor{black}{RGB}{0, 0, 0}
\definecolor{white}{RGB}{255, 255, 255}
\definecolor{light}{RGB}{125, 125, 125}
\definecolor{dark}{RGB}{50, 50, 50}
\definecolor{main}{RGB}{123, 17, 19}
\definecolor{bluem}{RGB}{6, 104, 225}
\definecolor{codegreen}{rgb}{0,0.6,0}
\definecolor{codegray}{rgb}{0.5,0.5,0.5}
\definecolor{codepurple}{rgb}{0.58,0,0.82}
\definecolor{backcolour}{rgb}{0.95,0.95,0.92}
\newcommand{\sqboxs}{1.5ex}
\newcommand{\sqbox}[1]{\textcolor{#1}{\rule{\sqboxs}{\sqboxs}}}
\newcommand{\sqboxblack}[1]{\setlength{\fboxsep}{0pt}\fbox{\sqbox{#1}}}
\lstdefinestyle{mystyle}{
    commentstyle=\color{light},
    keywordstyle=\color{main},
    stringstyle=\color{main},
    basicstyle=\ttfamily\scriptsize,
    breakatwhitespace=false,
    breaklines=true,
    captionpos=b,
    keepspaces=true,
    numbers=left,
    numbersep=5pt,
    framexleftmargin=10pt,
    xleftmargin=10pt,
    numberstyle=\tiny\ttfamily\color{light},
    showspaces=false,
    rulecolor=\color{light},
    showstringspaces=false,
    showtabs=false,
    tabsize=2,
    frame=lines,
    escapechar={|}
}
\def\err{\text{err}}
\def\ece{\text{ece}}
\def\nll{\text{nll}}
\def\E{\mathbb{E}}
\def\utwoc{U2C}
\newtheorem{theorem}{Theorem}[section]
\newtheorem{lemma}[theorem]{Lemma}
\title{Unified uncertainty calibration}
\author{Kamalika Chaudhuri and David Lopez-Paz\\
FAIR at Meta\\
\texttt{\{kamalika,dlp\}@meta.com}
}
\begin{document}

\maketitle

\begin{abstract}
  To build robust, fair, and safe AI systems, we would like our classifiers to say ``I don't know'' when facing test examples that are difficult or fall outside of the training classes.
  The ubiquitous strategy to predict under uncertainty is the simplistic \emph{reject-or-classify} rule: abstain from prediction if epistemic uncertainty is high, classify otherwise.
  Unfortunately, this recipe does not allow different sources of uncertainty to communicate with each other, produces miscalibrated predictions, and it does not allow to correct for misspecifications in our uncertainty estimates.
  To address these three issues, we introduce \emph{unified uncertainty calibration (\utwoc)}, a holistic framework to combine aleatoric and epistemic uncertainties.
  \utwoc{} enables a clean learning-theoretical analysis of uncertainty estimation, and outperforms reject-or-classify across a variety of ImageNet benchmarks. 
\end{abstract}

\section{Introduction}

%
How can we build AI systems able to say ``I do not know''?
This is the problem of uncertainty estimation, key to building robust, fair, and safe prediction pipelines~\citep{ai_safety}.
In a perspective for \emph{Nature Machine Intelligence}, \citet{medical_domain} defend that AI holds extraordinary promise to transform medicine, but acknowledges
\begin{quote}
    \begin{small}
    the reluctance to delegate decision making to machine intelligence in cases where patient safety is at stake. To address some of these challenges, medical AI, especially in its modern data-rich deep learning guise, needs to develop a principled and formal uncertainty quantification.
    \end{small}
\end{quote}
Endowing models with the ability to recognize when ``they do not know'' gains special importance in the presence of distribution shifts~\citep{irm}. This is because uncertainty estimates allow predictors to abstain when facing anomalous examples beyond their comfort zone. In those situations, aligned AI's should delegate prediction---say, the operation of an otherwise self-driving vehicle---to humans.

%
The problem of uncertainty estimation in AI systems is multifarious and subject to a vast research program~\citep{survey_1, survey_2, survey_3, survey_4}.
Yet, we can sketch the most common approach to prediction under uncertainty in a handful of lines of code.
Consider a neural network trained on $c=2$ in-domain classes, later deployed in a test environment where examples belonging to unseen categories can spring into view, hereby denoted by the out-domain class $c+1$.
Then, the ubiquitous \emph{reject-or-classify}~\citep[RC]{chow_1, chow_2} recipe implements the following logic: 
\begin{lstlisting}[language=Python]
def |\color{bluem}reject\_or\_classify|(f, u, x, theta=10):
    # Compute softmax vector [0.1, 0.9], using classifier, describing aleatoric uncertainty
    s_x = s(f(x)) 
    # Does our epistemic uncertainty exceed a threshold?
    if u(x) >= theta: 
      # yes: abstain with label c + 1 and total confidence
      return [0, 0, 1] 
    else:
      # no: predict in-domain with total confidence 
      return s_x + [0]
\end{lstlisting}
This code considers a softmax vector summarizing our \emph{aleatoric} uncertainty about the two in-domain classes, together with a real-valued \emph{epistemic} uncertainty. 
If our epistemic uncertainty exceeds a certain threshold, we believe that the test input $x$ belongs to unseen out-domain category, so we abstain from prediction, hereby signaled as a third class.
Else, we classify the input into one of the two in-domain categories, according to the softmax vector.

There are three problems with this recipe.
First, different types of uncertainty cannot ``communicate'' with each other, so we may reject easy-to-classify examples or accept for prediction out-domain examples.
Second, the RC process results in miscalibrated decisions, since RC abstains or predicts only with absolute (binary) confidence.
Third, the recipe does not allow us to correct for any misspecification in the epistemic uncertainty estimate.

%
\paragraph{Contribution}
To address the issues listed above, we introduce \emph{unified uncertainty calibration} (\utwoc), a framework to integrate aleatoric and epistemic uncertainties into well-calibrated predictions. Our approach blends aleatoric and epistemic softly, allowing them to talk to each other.
The resulting probabilistic predictions are well calibrated jointly over the $c+1$ classes covering predictions and abstentions.
Finally, our approach allows non-linear calibration of epistemic uncertainty, resulting in an opportunity to correct for misspecifications or reject in-domain examples.
Our framework allows a clean theoretical analysis of uncertainty estimation, and yields state-of-the-art performance across a variety of standard ImageNet benchmarks.
Our code is publicly available at:
\begin{center}
\url{https://github.com/facebookresearch/UnifiedUncertaintyCalibration}
\end{center}

%
Our exposition is organized as follows.
Section~\ref{sec:setup} reviews the basic supervised learning setup, setting out the necessary notations. Section~\ref{sec:sota} surveys current trends to estimate different types of uncertainty, namely aleatoric (Subsection~\ref{sec:aleatoric}) and epistemic (Subsection~\ref{sec:epistemic}).
In Subsection~\ref{sec:rc} we explain the commonly used reject-or-classify recipe to combine different sources of uncertainty, and raise some concerns about this practice. 
Section~\ref{sec:dc} introduces \emph{unified uncertainty calibration}, an unified framework addressing our concerns. 
Section~\ref{sec:theory} provides some theoretical results about its behavior, and Section~\ref{sec:experiments} evaluates the efficacy of our ideas across a variety of standard benchmarks.
Finally, we close our discussion in Section~\ref{sec:discussion} with some pointers for future work.

\paragraph{Related work} A body of work has looked into developing better uncertainty estimators, both aleatoric or epistemic. Our goal is to combine these two kinds of estimators efficiently. Also related to us is a recent line of work that measures uncertainty under distribution shift~\citep{wald2021calibration,yu2022robust, tibshirani2019conformal}; unlike us, they assume access to out-domain data, either real or artificially generated through augmentations~\citep{lang2022estimating}. 


\section{Learning setup}
\label{sec:setup}

Our goal is to learn a classifier $f$ mapping an input $x_i \in \mathbb{R}^d$ into its label $y_i \in \{1, \ldots c\}$.
In the sequel, each $x_i$ is an image displaying one of $c$ possible objects.
We consider neural network classifiers of the form $f(x_i) = w(\phi(x_i))$, where $\phi(x_i) \in \mathbb{R}^{d'}$ is the representation of $x_i$.
The classifier outputs logit vectors $f(x_i) \in \mathbb{R}^c$, where $f(x_i)_j$ is a real-valued score proportional to the log-likelihood of $x_i$ belonging to class $j$, for all $i = 1, \ldots, n$ and $j = 1, \ldots, c$.
Let $s$ be the softmax operation normalizing a logit vector $f(x_i)$ into the probability vector $s(f(x_i))$, with coordinates
\begin{equation*}
    s(f(x_i))_j = s_f(x_i)_j = \frac{\exp(f(x_i))_j}{\sum_{k=1}^c \exp(f(x_i))_k},
\end{equation*}
for all $i = 1, \ldots, n$, and $j = 1, \ldots, c$.
Bearing with us for two more definitions, let 
$h_f(x_i) = \mathop{\text{argmax}}_{j \in \{1, \ldots, j\}}\, f(x_i)_j$
to be the hard prediction on $x_i$, where $h_f(x_i) \in \{1, \ldots, c\}$.
Analogously, define
$\pi_f(x_i) = \mathop{\text{max}}_{c \in \{1, \ldots, c\}}\, s(f(x_i))_j,$
as the prediction confidence on $x_i$, where $s(\cdot)$ ensures that $\pi_f(x_i) \in [0, 1]$.

To train our deep neural network, we access a dataset $\mathcal{D} = \{(x_i, y_i)\}_{i=1}^n$ containing \emph{in-domain} examples $(x_i, y_i)$ drawn iid from the probability distribution $P^\text{in}(X, Y)$, and we search for the empirical risk minimizer~\citep{vapnik}:
\begin{equation*}
    f = \mathop{\text{argmin}}_{\tilde{f}} \frac{1}{n} \sum_{i=1}^n \ell(\tilde{f}(x_i), y_i).
\end{equation*}
Once trained, our classifier faces new inputs $x'$ from the \emph{extended test distribution}, which comprises of a mixture of test inputs drawn from the in-domain distribution $P^{\text{in}}(X,Y)$ and inputs drawn from an out-of-domain distribution $P^\text{out}(X, Y)$. The out-domain test examples do not belong to any of the $c$ classes observed during training---we formalize this by setting $y = c+1$ for every test example $(x, y)$ drawn from $P^\text{out}(X, Y)$.
%
%
%

%
Central to our learning setup is that we {\emph{do not observe}} any out-domain data during training.
During testing, the machine observes a mixture of in-domain and out-domain data, with no supervision as to what is what. To address out-domain data, we extend our neural network $f$ as $f^\star$, now able to predict about $x$ over $c+1$ classes, with corresponding hard labels $h_{f^\star}(x)$ and predictive confidences $\pi_{f^\star}(x)$. 

Under the test regime described above, we evaluate the performance of the procedure $f^\star$ by means of two metrics.
On the one hand, we measure the average classification error
\begin{equation}\label{eq:err}
    \err_P(f^\star) = \Pr_{(x, y) \sim P}\Big[h_{f^\star}(x) \ne y\Big].
\end{equation}
On the other hand, to evaluate our estimate of confidence, we look at the expected calibration error
\begin{equation}\label{eq:ece}
    \ece_P(f^\star) = \E_{(x, y) \sim P} \E_{p \sim U[0, 1]} \left[ \left| \Pr\left( h_{f^\star}(x) = y \mid \pi_{f^\star}(x) = p \right) - p \right| \right].
\end{equation}
Roughly speaking, neural networks with small ece produce calibrated confidence scores, meaning $\pi_{f^\star}(x_i) \approx P(Y = y_i \mid X = x_i)$. As a complement to ece, we look at the expected negative log-likelihood
\begin{equation}\label{eq:nll}
\nll_P(f^\star) = \E_{(x, y) \sim P} \left[ -\log (\pi_{f^\star}(x))_{y} \right].
\end{equation}
%

%
%
%
%
%
%

\section{Current trends for uncertainty estimation}
\label{sec:sota}

Most literature differentiates between aleatoric and epistemic uncertainty~\citep{aleatoric_epistemic_1, aleatoric_epistemic_2, aleatoric_epistemic_3}.
In broad strokes, we consider two sources of uncertainty by factorizing the density value of a training example $(x, y)$ drawn from the in-domain distribution $P^\text{in}(X, Y)$:
\begin{equation}
    p^\text{in}(x, y) = \underbrace{p^\text{in}(y \mid x)}_{\text{aleatoric}}
                        \cdot
                        \underbrace{p^\text{in}(x)}_{\text{epistemic}}.
\end{equation}
As implied above, (i) aleatoric uncertainty concerns the irreducible noise inherent in annotating each input $x$ with its corresponding label $y$, and (ii) epistemic uncertainty relates to the atypicality of the input $x$. 
When learning from a dataset containing images of cows and camels, a good predictor raises its aleatoric uncertainty when a test image does depict a cow or a camel, yet it is too blurry to make a decision; epistemic uncertainty fires when the image depicts something other than these two animals---for instance, a screwdriver.
We review these statistics, as well as prominent algorithms to estimate them from data, in Subsections~\ref{sec:aleatoric} and ~\ref{sec:epistemic}.

Given estimates for aleatoric and epistemic uncertainty, one needs a mechanism to combine this information into a final decision for each test input: either classify it into one of the $c$ in-domain categories, or abstain from prediction.
We review in Subsection~\ref{sec:rc} the most popular blend, known as \emph{reject-or-classify}~\citep{chow_1, chow_2}.
Here, the machine decides whether to classify or abstain by looking at the epistemic uncertainty estimate in isolation.
Then, if the epistemic uncertainty estimate exceeds a threshold, the machine abstains from predictor.
Else, the example the machine classifies the input into one of the $c$ in-domain categories.
As we will discuss in the sequel, the reject-or-classify has several problems, that we will address with our novel framework of unified calibration.

\subsection{Estimation of aleatoric uncertainty}
\label{sec:aleatoric}

The word \emph{aleatoric} has its roots in the Latin~\emph{āleātōrius}, concerning dice-players and their games of chance.
In machine learning research, aleatoric uncertainty arises due to irreducible sources of randomness in the process of labeling data.
Formally, the aleatoric uncertainty of an example $(x, y)$ is a supervised quantity, and relates to the conditional probability $P(Y = y \mid X = x)$.
If the true data generation process is such that $P(Y = y \mid X = x) = 0.7$, there is no amount of additional data that we could collect in order to reduce our aleatoric uncertainty about $(x, y)$---it is irreducible and intrinsic to the learning task at hand. 

In practice, a classifier models aleatoric uncertainty if it is well calibrated~\citep{calibration_1, calibration_2}, namely it satisfies $\pi_f(x) \approx P(Y = y \mid X = x)$ for all examples $(x, y)$. In a well-calibrated classifier, we can interpret the maximum softmax score $\pi_f(x)$ as the probability of the classifier assigning the right class label to the input $x$. However, modern machine learning models are not well calibrated by default, often producing over-confident predictions.


A common technique to calibrate deep neural network classifiers is Platt scaling~\citep{platt}. The idea here is to draw a fresh validation set $\{(x^\text{va}_i, y^\text{va}_i)\}_{i=1}^m$ drawn from the in-domain distribution $P^\text{in}(X, Y)$, and optimize the cross-entropy loss to find a real valued {\em{temperature}} parameter $\tau$ to scale the logits. Given $\tau$, we deploy the calibrated neural network $f_\tau(x) = f(x) / \tau$.
\citet{calibration_1} shows that Platt scaling is an effective tool to minimize the otherwise non-differentiable metric of interest, ece~\eqref{eq:ece}.
%
%
However calibrated, such classifier lacks a mechanism to determine when a test input does not belong to any of the $c$ classes described by the in-domain distribution $P^\text{in}(X, Y)$.
Such mechanisms are under the purview of epistemic uncertainty, described next.

%

\subsection{Estimation of epistemic uncertainty}
\label{sec:epistemic}

From Ancient Greek \textgreek{ἐπιστήμη}, the word \emph{epistemic} relates to the nature and acquisition of knowledge.
In machine learning, we can relate the epistemic uncertainty $u(x)$ of a test input $x$ to its in-domain input density $p^\text{in}(X=x)$: test inputs with large in-domain density values have low epistemic uncertainty, and vice-versa. In contrast to aleatoric uncertainty, we can reduce our epistemic uncertainty about $x$ by actively collecting new training data around $x$. Therefore, our epistemic uncertainty is not due to irreducible randomness, but due to lack of knowledge---What is $x$ like?---or \emph{episteme}.

Epistemic uncertainty is an unsupervised quantity, and as such it is more challenging to estimate than its supervised counterpart, aleatoric uncertainty.
In practical applications, it is not necessary---nor feasible---to estimate the entire in-domain input density $p^\text{in}(X)$, and simpler estimates suffice. The literature has produced a wealth of epistemic uncertainty estimates $u(x)$, reviewed in surveys~\citep{survey_1, survey_2, survey_3, survey_4} and evaluated across rigorous empirical studies~\citep{baselines, uimnet, trust, open_ood}.
%
We recommend the work of~\citet{open_ood} for a modern comparison of a multitude of uncertainty estimates.
For completeness, we list some examples below. 
\begin{itemize}
  \item The negative maximum logit~\citep[MaxLogit]{max_logit} estimates epistemic uncertainty as $u(x_i) = -\max_{j} f(x_i)_j$.
  Test inputs producing large maximum logits are deemed certain, and vice-versa.
  \item Feature activation reshaping methods set to zero the majority of the entries in the representation space $\phi(x)$.
  The competitive method ASH~\citep{ash} sets to a constant the surviving entries, resulting in a sparse and binary representation space.
  \item Methods based on Mahalanobis distances~\citep[Mahalanobis]{mahalanobis, deterministic, mahalanobis_fix} estimate one Gaussian distribution per class in representation space.
  Then, epistemic uncertainty is the Mahalanobis distance between the test input and the closest class mean.
  \item $k$-nearest neighbor approaches~\citep[KNN]{deep_knn} are a well-performing family of methods.
  These estimate epistemic uncertainty as the average Euclidean distance in representation space between the test input and the $k$ closest inputs from a validation set. 
  \item Ensemble methods, such as deep ensembles~\citep{deep_ensembles}, multiple-input multiple-output networks~\citep{mimo}, and DropOut uncertainty~\citep[Dropout]{dropout} train or evaluate multiple neural networks on the same test input.
  Then, epistemic uncertainty relates to the variance across predictions.
\end{itemize}

Choosing the right epistemic uncertainty estimate $u(x)$ depends on multiple factors, such as the preferred inductive bias, as well as our training and testing budgets.
For example, the logit method requires no compute in addition to $f(x)$, but often leads to increasing epistemic \emph{certainty} as we move far away from the training data~\citep{relus_confidence}.
In contrast, local methods are not vulnerable to this ``blindness with respect overshooting'', but require more computation at test time---see Mahalanobis methods---or the storage of a validation set in memory, as it happens with $k$NN methods.
Finally, there is power in building our uncertainty estimate $u(x)$ from scratch~\citep{ssl}, instead of implementing it on top of the representation space of our trained neural network. This is because neural network classifiers suffer from a simplicity bias~\citep{simplicity} that removes the information about $x$ irrelevant to the categorization task at hand. But, this information may be useful to signal high epistemic uncertainty far away from the training data.

For the purposes of this work, we consider $u(x)$ as given, and focus our efforts on its integration with the $c$-dimensional in-domain logit vector. Our goal is to produce a meaningful $(c+1)$-dimensional probability vector leading to good classification error and calibration over the extended test distribution.
This is an open problem as of today, since aleatoric and epsistemic uncertainty estimates combine in a simplistic manner, the reject-or-classify recipe.

\subsection{Reject or classify: simplistic combination of uncertainties}
\label{sec:rc}

We are now equipped with a calibrated neural network $f_\tau$---able to discern between $c$ in-domain classes---and an epistemic uncertainty estimator $u$---helpful to determine situations where we are dealing with anomalous or out-of-distribution test inputs.
The central question of this work emerges: when facing a test input $x$, how should we combine the information provided by the $c$ real-valued scores in $f_\tau(x)$ and the real-valued score $u(x)$, as to provide a final probabilistic prediction?

Prior work implements a \emph{reject or classify}~\citep[RC]{chow_1, chow_2} recipe.
In particular, classify test input $x$ as
\begin{equation}
    \hat{y} =
    \begin{cases}
    h_{f_\tau}(x) & \text{if } u(x) < \theta,\\
    c + 1         & \text{else.}
    \end{cases}  
\end{equation}
In words, RC classifies as out-of-distribution (with a label $\hat{y} = c+1$) those examples whose epistemic uncertainty exceeds a threshold $\theta$, and assigns an in-domain label ($\hat{y} = \hat{c} \in \{1, \ldots, c\}$) to the rest.
Common practice employs a fresh validation set $\{(x^\text{va}_i)\}_{i=1}^m$ drawn iid from the in-domain distribution $P^\text{in}(X, Y)$ to compute the threshold $\theta$.
One common choice is to set $\theta$ to the $\alpha = 0.95$ percentile of $u(x^\text{va})$ across the validation inputs.
This results in ``giving-up'' classification on the $5\%$ most uncertain inputs from the in-domain distribution---and all of those beyond---according to the epistemic uncertainty measure $u$.
%

Overall, we can express the resulting RC pipeline as a machine producing \emph{extended} $(c+1)$-dimensional softmax vectors
%
%
\begin{equation}\label{eq:rc}
    s^\star_\text{RC}(x) = \text{concat}\left(s\left(f_\tau(x)_1, \ldots, f_\tau(x)_c\right) \cdot [u(x) < \theta], 1 \cdot [u(x) \geq \theta] \right).
\end{equation}
We argue that this construction has three problems.
First, aleatoric and epistemic uncertainties do not ``communicate'' with each other.
In the common cases where $u(x)$ is misspecified, we may reject in-domain inputs that are easy to classify, or insist on classifying out-domain inputs. 
Second, the softmax vector~\eqref{eq:rc} is not calibrated over the extended problem on $c+1$ classes, as we always accept and reject with total confidence, resulting in a binary $(c+1)$-th softmax score.
Third, the uncertainty estimate $u(x)$ may speak in different units than the first $c$ logits.
To give an example, it could happen that $u(x)$ grows too slowly as to ``impose itself'' on out-domain examples against the in-domain logits.

\section{Unified Uncertainty calibration: a holistic approach}
\label{sec:dc}

To address the problems described above, we take a holistic approach to uncertainty estimation by learning a good combination of aleatoric and epistemic uncertainty. Formally, our goal is to construct an extended softmax vector, over $c+1$ classes, resulting in low test classification error and high calibration jointly over in-domain and out-domain data. Our approach, called \emph{unified uncertainty calibration} (\utwoc), works as follows.
First, collect a fresh validation set $\{(x^\text{va}_i, y^\text{va}_i)\}_{i=1}^m$ from the in-domain distribution $P^\text{in}(X, Y)$.
Second, compute the threshold $\theta$ as the $\alpha = 0.95$ percentile of $u(x^\text{va}_i)$ across all inputs in the validation set.
Third, relabel those $5\%$ examples with $y^\text{va}_i = c + 1$.
Finally, learn a non-linear epistemic calibration function $\tau_u : \mathbb{R} \to \mathbb{R}$ by minimizing the cross-entropy on the relabeled validation set:
\begin{equation}
    \tau_u = \mathop{\text{argmin}}_{\tilde{\tau}_u} - \sum_{i=1}^m \log \text{concat}\left(f_{\tau}(x^\text{va}_i), \tilde \tau_u(x^\text{va}_i)\right)_{y^\text{va}_i}.
\end{equation}
After finding $\tau_u$, our \utwoc{} pipeline deploys a machine producing \emph{extended} $(c+1)$-dimensional softmax vectors:
\begin{equation}\label{eq:dc}
s^\star_\text{\utwoc}(x) = s\left(f_{\tau}(x)_1, \ldots f_{\tau}(x)_c, \tau^u(u(x))\right)
\end{equation}

The construction~\eqref{eq:dc} has three advantages, addressing the three shortcomings from the the previous RC~\eqref{eq:rc}.
First, aleatoric and epistemic uncertainties now communicate with each other by sharing the unit norm of the produced extended softmax vectors. 
Because~\eqref{eq:dc} can describe both aleatoric and epistemic uncertainty non-exclusively, there is the potential to identify easy-to-classify examples that would otherwise be rejected.
Second, we can now calibrate the extended softmax vectors~\eqref{eq:dc} across the extended classification problem of $c+1$ classes.
For instance, we could now reject examples with different levels of confidence.
Third, the \emph{non-linear} epistemic calibration $\tau_u(u(x))$ has the potential to allow all of the logits to ``speak the same units'', such that aleatoric and epistemic uncertainty have appropriate rates of growth. 

Unified calibration reduces the difficult problem of combining aleatoric and epistemic uncertainty over $c$ classes into the easier problem of optimizing for aleatoric uncertainty over $c+1$ classes.
This allows us to use (nonlinear!) Platt scaling to optimize the ece over the extended problem. 
In addition, the extended softmax vectors provided by \utwoc{} allow reasoning in analogy to the well-known~\emph{quadrant of knowledge}~\citep{quadrant}.
To see this, consider a binary classification problem with uncertainties jointly calibrated with \utwoc{}, resulting in three-dimensional extended softmax vectors that describe the probability of the first class, second class, and out-domain class. Then, 
\begin{itemize}
    \item vectors such as $(0.9, 0.1, 0.0)$ are \emph{known-knowns}, things that we are aware of (we can classify) and we understand (we know how to classify), no uncertainty;
    \item vectors such as $(0.4, 0.6, 0.0)$ are \emph{known-unknowns}, things we are aware of but we do not understand. These are instances with aleatoric uncertainty, but no epistemic uncertainty;
    \item vectors such as $(0.1, 0.0, 0.9)$ are \emph{unknown-knowns}, things we understand but are not aware of. These are instances with epistemic uncertainty, but no aleatoric uncertainty.
\end{itemize}
Finally, there are \emph{unknown-unknowns}, things that we are not aware of, nor understand. 
These are patterns not included in the current representation space---as such, we say that the model is ``myopic'' with respect those features~\citep{ishmael}.
Unknown-unknowns are a necessary evil when learning about a complex world with limited computational resources~\citep{vervaeke}.
Otherwise, any learning system would have to be aware of a combinatorially-explosive amount of patterns to take the tiniest decision---a paralyzing prospect.
Rather cruel experiments with cats~\citep{cats} show how unknown-unknowns relate to biological learning systems: kittens housed from birth in an environment containing only vertical stripes were, later in life, unable to react to horizontal stripes.

\section{Theoretical results}
\label{sec:theory}

We attempt to understand the relative performance of RC and \utwoc{} by looking closely at where data points from $P^{\text{in}}$ and $P^{\text{out}}$ lie. Observe that reject-or-classify rejects when $u(x) \geq \theta$, and unified uncertainty calibration rejects when $\max_i f_{\tau}(x)_i \leq \tau(u(x))$; to understand their relative differences, we look at the space induced by $\tau(u(x))$ and the max-logit.

\begin{figure}
     \centering
     \begin{subfigure}[b]{0.32\textwidth}
         \centering
         \includegraphics[width=\textwidth]{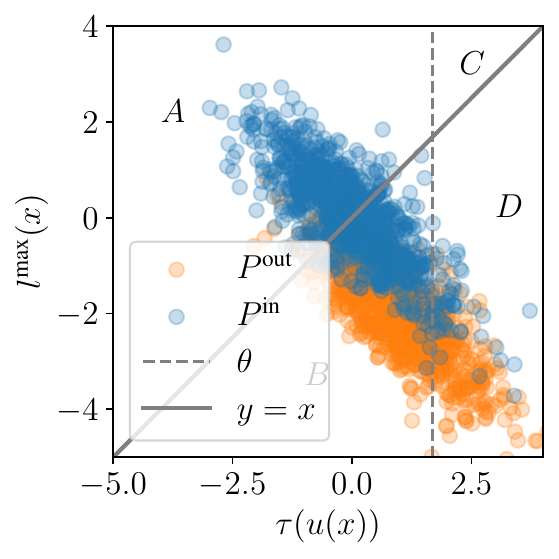}
         \caption{}
         \label{fig:main:regions}
     \end{subfigure}
     \hfill
     \begin{subfigure}[b]{0.32\textwidth}
         \centering
         \includegraphics[width=\textwidth]{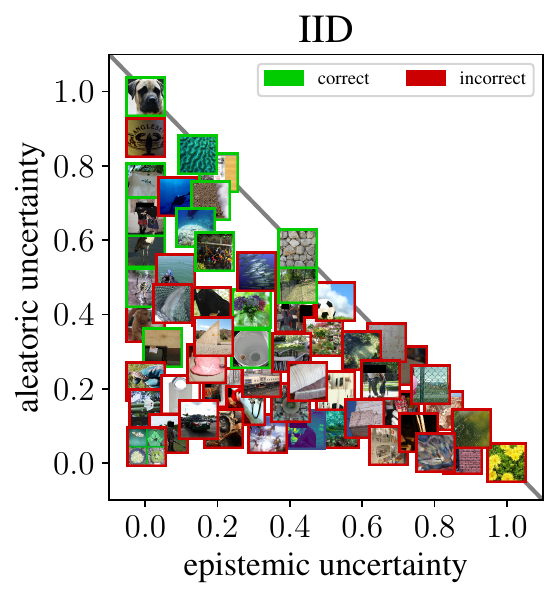}
         \caption{}
         \label{fig:main:iid}
     \end{subfigure}
     \hfill
     \begin{subfigure}[b]{0.32\textwidth}
         \centering
         \includegraphics[width=\textwidth]{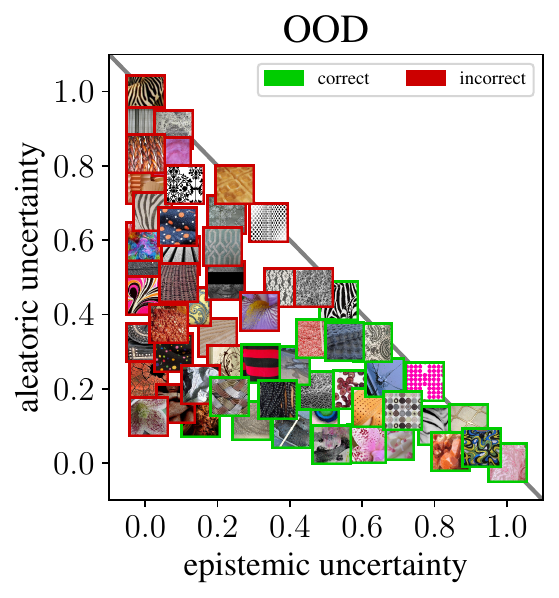}
         \caption{}
         \label{fig:main:ood}
     \end{subfigure}
        \caption{Panel (a) shows the acceptance/rejection regions of RC and \utwoc{}, serving as a visual support to our theoretical analysis.
        Panel (b) shows examples of IID images according to their epistemic uncertainty ($u(x)$, horizontal axis), aleatoric uncertainty ($\pi_f(x)$, vertical axis), and correctness of classification (border color).
         Panel (c) illustrates OOD images similarly. Last two panels illustrate how \utwoc{} covers all possible aleatoric-epistemic combinations, in way that correlates appropriately to (mis)classification, both IID and OOD.}
        \label{fig:main}
\end{figure}

Figure~\ref{fig:main:regions} shows that the accept/reject regions break up the space into four parts: $A$, where both methods predict with the neural network $f$, $B$, where \utwoc\ rejects but not RC, $C$, where RC rejects but not \utwoc, and $D$, where both reject.
$A$ is a clear in-distribution region of high confidence predictions, and $D$ is a clear out-of-distribution zone with high uncertainty. More interesting are the regions $B$ and $C$; in $C$, the uncertainty is high but max-logits are higher.
This is the ``Dunning-Kruger'' region---little data is seen here during training, yet the network is highly confident. 
Conversely, $B$ is the region of high in-distribution aleatoric uncertainty, with low to moderate epistemic uncertainty.


\begin{lemma}
\label{lem:error}
The difference of errors between RC and \utwoc\ based on a network $f_{\tau}$ is:
\begin{align*}
\err_{P^{\text{out}}}(RC) - \err_{P^{\text{out}}}(\utwoc) & =  P^{\text{out}}(B) - P^{\text{out}}(C) \\
\err_{P^{\text{in}}}(RC) - \err_{P^{\text{in}}}(\utwoc) & =  P^{\text{in}}(C) - P^{\text{in}}(B) \\ 
 & + P^{\text{in}}(B) \cdot \err_{P^{\text{in}}}(h_{f_{\tau}}(x) | x \in B)\\
 & - P^{\text{in}}(C) \cdot \err_{P^{\text{in}}}(h_{f_{\tau}}(x) | x \in C).
\end{align*}
\end{lemma}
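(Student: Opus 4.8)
The plan is to decompose each error quantity by conditioning on the four regions $A$, $B$, $C$, $D$ and then subtract, leaning entirely on the law of total probability. First I would pin down the regions in terms of the two decision rules: RC accepts (predicts with $f_{\tau}$) exactly when $u(x) < \theta$, whereas \utwoc{} accepts exactly when $\max_i f_{\tau}(x)_i > \tau_u(u(x))$. Intersecting the accept/reject events of the two methods gives $A$ (both accept), $B$ (RC accepts, \utwoc{} rejects), $C$ (RC rejects, \utwoc{} accepts), and $D$ (both reject). Consequently RC accepts precisely on $A \cup B$ and \utwoc{} accepts precisely on $A \cup C$, and each rejects on the complementary pair.

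For the out-domain line I would use that every $(x,y) \sim P^{\text{out}}$ has true label $y = c+1$, so a method errs \emph{iff} it accepts (emits some in-domain label). Hence $\err_{P^{\text{out}}}(RC) = P^{\text{out}}(A) + P^{\text{out}}(B)$ and $\err_{P^{\text{out}}}(\utwoc) = P^{\text{out}}(A) + P^{\text{out}}(C)$; subtracting cancels the shared $P^{\text{out}}(A)$ and leaves $P^{\text{out}}(B) - P^{\text{out}}(C)$, which is the first claimed identity.

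For the in-domain line the key observation is the reversed asymmetry: every $(x,y) \sim P^{\text{in}}$ has $y \in \{1,\dots,c\}$, so rejecting is \emph{always} an error while accepting errs exactly when $h_{f_{\tau}}(x) \ne y$. Conditioning on each region, the per-region error rates are: on $A$ both accept, each contributing $\err_{P^{\text{in}}}(h_{f_{\tau}}(x) \mid x \in A)$; on $D$ both reject, each contributing $1$; on $B$ RC accepts (error $\err_{P^{\text{in}}}(h_{f_{\tau}}(x) \mid x \in B)$) while \utwoc{} rejects (error $1$); on $C$ RC rejects (error $1$) while \utwoc{} accepts (error $\err_{P^{\text{in}}}(h_{f_{\tau}}(x) \mid x \in C)$). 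Applying the law of total probability to each method and subtracting, the $A$ and $D$ contributions are identical and cancel, and what survives is exactly $P^{\text{in}}(C) - P^{\text{in}}(B) + P^{\text{in}}(B)\,\err_{P^{\text{in}}}(h_{f_{\tau}}(x) \mid x \in B) - P^{\text{in}}(C)\,\err_{P^{\text{in}}}(h_{f_{\tau}}(x) \mid x \in C)$, matching the second identity.

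I do not expect a substantive mathematical obstacle: the argument is exact case analysis plus the law of total probability, with no approximation or limiting step. The only genuine care-point, and the easiest place to err, is the bookkeeping around the two opposite error conventions—on $P^{\text{out}}$ acceptance is the mistake, whereas on $P^{\text{in}}$ rejection is the automatic mistake and acceptance inherits the classifier's conditional error. Keeping region membership correctly aligned with which method accepts where, so that the $A$ and $D$ terms truly cancel and the surviving $B$/$C$ terms carry the right signs, is the crux of the derivation.
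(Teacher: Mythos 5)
Your proof is correct and follows essentially the same route as the paper: partition the input space into the four regions $A,B,C,D$, evaluate each method's error region by region (under $P^{\text{out}}$ acceptance is always an error, under $P^{\text{in}}$ rejection is always an error and acceptance errs at the classifier's conditional rate), and subtract so that the $A$ and $D$ contributions cancel. Your bookkeeping is in fact more careful than the paper's own: its proof writes $\err_{P^{\text{out}}}(RC)=P^{\text{out}}(A)+P^{\text{out}}(C)$ and $\err_{P^{\text{out}}}(\utwoc)=P^{\text{out}}(A)+P^{\text{out}}(B)$, which swaps $B$ and $C$ relative to the stated region definitions (and would flip the sign of the first identity), whereas your versions, $\err_{P^{\text{out}}}(RC)=P^{\text{out}}(A)+P^{\text{out}}(B)$ and $\err_{P^{\text{out}}}(\utwoc)=P^{\text{out}}(A)+P^{\text{out}}(C)$, are the ones consistent with the lemma's conclusion.
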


If $P^{\text{out}}$ has a lot of mass in $B$, and little in $C$, then \utwoc\ outperforms RC. $B$ is the region of high aleatoric uncertainty and low to moderate epistemic uncertainty, and hence communication between different kinds of uncertainties helps improve performance. 
In contrast, if $P^{\text{in}}$ has a lot of mass in $C$ but little in $B$, then RC outperforms \utwoc\ in terms of hard predictions. The training loss for $\tau$ ensures that at least $95\%$ of the validation data lies in $A$ and at most $5\%$ in $D$. Therefore, if the underlying neural network has high accuracy, and if $\tau$ generalizes well, then we expect $P^{\text{in}}(B \cup C)$ to be low.

A related question is what happens in $C$, which is the region where \utwoc\ predicts with high confidence yet low evidence. Since both the max-logit and the uncertainty are complex functions of $x$, all possible values of $(\max_i (f_{\tau}(x))_i, \tau(u(x)))$ are not achievable, and varying $x$ within the instance space induce pairs within an allowable set. Choosing $u$ to limit that allowable set will permit us to bound $C$. For example, for binary linear classification, if we ensure that the uncertainty estimate $u$ grows faster than the logits, then $C$ will be bounded by design.



While Lemma~\ref{lem:error} above analyzes hard predictions, we expect most of the advantages of \utwoc\ to be due to its ability to ``softly'' adjust its confidence. To understand this, Lemma~\ref{lem:nll} analyzes the negative log-likelihood of both methods. Analogous results for ece are in the Appendix. 

\begin{lemma}
\label{lem:nll}
The nll of \utwoc\ based on a network $f_{\tau}$ is given by: 
\begin{eqnarray*}
\nll_{P^{\text{out}}}(\utwoc) = & - \E_{x \sim P^{\text{out}}} \left[ \log \frac{e^{\tau(u(x))}}{ \sum_{j=1}^{c} e^{f_{\tau}(x)_j} + e^{\tau(u(x)} }\right], & x \sim P^{\text{out}}.\\
\nll_{P^{\text{in}}}(\utwoc) = & - \E_{(x, y) \sim P^{\text{in}}} \left[ \log \frac{e^{f_{\tau}(x)_y}}{ \sum_{j=1}^{c} e^{f_{\tau}(x)_j} + e^{\tau(u(x)} }\right], & x \sim P^{\text{in}}.
\end{eqnarray*}
If $x \sim P^{\text{out}}$, then the nll of RC is $0$ for $x \in C \cup D$, and $\infty$ for $x \in A \cup B$. If $x \sim P^{\text{in}}$, the nll of RC is as follows:
\begin{eqnarray*}
    \nll_{P^{\text{in}}}(RC) = & \infty, & x \in C \cup D.\\
    = & P^{\text{in}}(A \cup B) \cdot \E_{(x, y) \sim P^{\text{in}}} \left[ \nll(f_{\tau}(x)) | x \in A \cup B \right] , & x \in A \cup B.
\end{eqnarray*}
\end{lemma}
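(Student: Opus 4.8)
The plan is to treat \utwoc\ and RC separately, since the former reduces to a direct substitution into the definition of nll~\eqref{eq:nll}, while the latter requires decomposing the expectation over the acceptance/rejection regions. For \utwoc, I would simply read off the $(c+1)$-dimensional softmax probabilities from the construction~\eqref{eq:dc}: the mass on class $j \le c$ is $e^{f_\tau(x)_j} / \big(\sum_{k=1}^c e^{f_\tau(x)_k} + e^{\tau(u(x))}\big)$, and the mass on the abstention class $c+1$ is $e^{\tau(u(x))} / \big(\sum_{k=1}^c e^{f_\tau(x)_k} + e^{\tau(u(x))}\big)$. Plugging the true label $y = c+1$ into the nll for $x \sim P^{\text{out}}$, and $y \in \{1, \ldots, c\}$ for $x \sim P^{\text{in}}$, yields the two claimed expressions immediately. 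No region decomposition is needed here precisely because the extended softmax always has full support over all $c+1$ classes.

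For RC, the key observation is that RC accepts (predicting with $f_\tau$) exactly when $u(x) < \theta$, which is the region $A \cup B$, and rejects (outputting the one-hot vector on class $c+1$) exactly when $u(x) \ge \theta$, which is $C \cup D$; note that this split depends only on $u(x)$ versus $\theta$ and never on the max-logit, which is what distinguishes RC's regions from \utwoc's. I would then condition on the accept/reject region within each of the out-domain and in-domain cases. For $x \sim P^{\text{out}}$ the true label is $c+1$: on $C \cup D$ the one-hot output assigns probability $1$ to $c+1$, so $-\log 1 = 0$; on $A \cup B$ it assigns probability $0$ to $c+1$, so $-\log 0 = \infty$. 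For $x \sim P^{\text{in}}$ the true label lies in $\{1, \ldots, c\}$: on $C \cup D$ the one-hot output assigns probability $0$ to every in-domain class, giving $\infty$; on $A \cup B$ the first $c$ coordinates coincide with the ordinary softmax $s(f_\tau(x))$, so the pointwise contribution is exactly the classifier's own nll $\nll(f_\tau(x)) = -\log s(f_\tau(x))_y$, and weighting its conditional expectation by $P^{\text{in}}(A \cup B)$ gives the stated term.

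The only real subtlety, rather than an obstacle, is bookkeeping. One must carefully justify that $A \cup B$ and $C \cup D$ are precisely the accept and reject regions of RC, and one must handle the degenerate $-\log 0 = \infty$ terms with care, reading the piecewise display as a decomposition of the expectation conditioned on each region rather than as a single pointwise identity. It is worth emphasizing in the write-up that \utwoc\ never incurs the infinite penalties that RC suffers whenever $P^{\text{out}}$ places mass on $A \cup B$ or $P^{\text{in}}$ places mass on $C \cup D$, since this contrast is the whole point of the lemma and follows directly once the clean \utwoc\ expressions and the $0/\infty$ dichotomy of RC are both in hand.
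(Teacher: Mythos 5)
Your proposal is correct and matches the paper's own proof essentially step for step: the \utwoc{} expressions follow by direct substitution of the extended softmax~\eqref{eq:dc} into the nll definition~\eqref{eq:nll}, and the RC claims follow by splitting into the accept region $A \cup B$ (where $u(x) < \theta$) and the reject region $C \cup D$, with the resulting $0/\infty$ dichotomy. Your added bookkeeping remarks (the region identification and the handling of $-\log 0$) are sound refinements of the same argument, not a different route.
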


Lemma~\ref{lem:nll} implies that the nll of RC will be infinite  if $P^{\text{in}}$ has some probability mass in $C \cup D$; this is bound to happen since the construction of $\tau$ ensures that $5\%$ of in-distribution examples from $P^{\text{in}}$ are constrained to be in $D$. The negative log-likelihood of RC will also be infinite if $P^{\text{out}}$ has some probability mass in $A \cup B$, which is also likely to happen. This is a consequence of the highly confident predictions made by RC. In contrast, \utwoc\ makes softer predictions that lower nll values.

\section{Experiments}
\label{sec:experiments}

We now turn to the empirical comparison between RC and \utwoc{}.
Our main objective is to show that unified uncertainty calibration achieves better performance metrics, namely err~\eqref{eq:err} and ece~\eqref{eq:ece}, over both the in-domain and out-domain data. 

\paragraph{Benchmarks} 
We perform a full-spectrum out-of-distribution detection analysis~\citep{openood_15}, evaluating metrics on four types of ImageNet benchmarks: in-domain, covariate shift, near-ood, and far-ood. 
First, to evaluate in-domain we construct two equally-sized splits of the original ImageNet validation set~\citep{dataset_imagenet}, that we call ImageNet-va and ImageNet-te. Our split ImageNet-va is used to find the epistemic uncertainty threshold $\theta$ and calibration parameters $(\tau, \tau_u)$.
The split ImageNet-te is our true in-domain ``test set'', and models do not have access to it until evaluation.
Second, we evaluate metrics under covariate shift using the in-domain datasets ImageNet-C~\citep{imagenet_c} containing image corruptions, ImageNet-R~\citep{imagenet_r} containing artistic renditions, and the ImageNet-v2 validation set~\citep{imagenet_v2}.
For these first two benchmarks, we expect predictors to classify examples $x$ into the appropriate in-domain label $y \in \{1, \ldots, c\}$.
Third, we evaluate metrics for the near-ood datasets NINCO~\citep{ninco} and SSB-Hard~\citep{ssb}.
Near-ood datasets are difficult out-of-distribution detection benchmarks, since they contain only examples from the out-of-distribution class $y = c +1$, but are visually similar to the in-domain classes.
Finally, we evaluate metrics on the far-ood datasets iNaturalist~\citep{dataset_inaturalist}, Texture~\citep{dataset_textures}, and OpenImage-O~\citep{vim}.
%
%
Far-ood datasets also contain only examples from the out-of-distribution class $y=c+1$, but should be easier to distinguish from those belonging to the in-domain classes.

\paragraph{Epistemic uncertainty estimates}
Both methods under comparison, RC and \utwoc{}, require the prescription of an epistemic uncertainty estimate $u(x)$.
We turn to the fantastic survey OpenOOD~\citep{openood_15} and choose four high-performing alternatives spannign different families.
These are the MaxLogit~\citep{max_logit}, ASH~\citep{ash}, Mahalanobis~\citep{mahalanobis_fix}, and KNN~\citep{deep_knn} epistemic uncertainty estimates, all described in Section~\ref{sec:epistemic}.

\begin{table}[t]
\begin{center}
\begin{NiceTabular}{llrrrr}
\CodeBefore
\rectanglecolor{white}{2-1}{5-6}
\rectanglecolor{gray!10}{6-1}{11-6}
\rectanglecolor{yellow!10}{12-1}{15-6}
\rectanglecolor{orange!10}{16-1}{24-6}
\Body
\toprule
& & \textbf{MaxLogit} & \textbf{ASH} & \textbf{Mahalanobis} & \textbf{KNN}\\
\midrule
\multirow{2}{*}{ImageNet-va}   & err & $25.1  \,\,{\color{BrickRed}    (+ 0.2  )}$ & $25.6  \,\,{\color{gray}        (- 0.0  )}$ & $24.9  \,\,{\color{gray}        (- 0.0  )}$ & $26.7  \,\,{\color{ForestGreen} (- 0.2  )}$\\
                               & ece & $7.0   \,\,{\color{ForestGreen} (- 0.7  )}$ & $7.1   \,\,{\color{ForestGreen} (- 0.6  )}$ & $7.7   \,\,{\color{ForestGreen} (- 0.6  )}$ & $7.2   \,\,{\color{ForestGreen} (- 0.9  )}$\\
\midrule
\multirow{2}{*}{ImageNet-te}   & err & $25.2  \,\,{\color{BrickRed}    (+ 0.2  )}$ & $25.8  \,\,{\color{gray}        (- 0.0  )}$ & $34.1  \,\,{\color{ForestGreen} (- 0.5  )}$ & $27.4  \,\,{\color{ForestGreen} (- 0.3  )}$\\
                               & ece & $6.2   \,\,{\color{ForestGreen} (- 0.6  )}$ & $6.6   \,\,{\color{ForestGreen} (- 0.6  )}$ & $21.4  \,\,{\color{ForestGreen} (- 1.6  )}$ & $7.3   \,\,{\color{ForestGreen} (- 0.8  )}$\\
\midrule
\multirow{2}{*}{ImageNet-v2}   & err & $38.7  \,\,{\color{BrickRed}    (+ 0.4  )}$ & $39.0  \,\,{\color{BrickRed}    (+ 0.2  )}$ & $49.8  \,\,{\color{ForestGreen} (- 0.5  )}$ & $40.3  \,\,{\color{gray}        (- 0.0  )}$\\
                               & ece & $14.5  \,\,{\color{ForestGreen} (- 0.1  )}$ & $13.5  \,\,{\color{ForestGreen} (- 0.2  )}$ & $35.9  \,\,{\color{ForestGreen} (- 1.5  )}$ & $12.0  \,\,{\color{gray}        (- 0.0  )}$\\
\midrule
\multirow{2}{*}{ImageNet-C}    & err & $67.7  \,\,{\color{BrickRed}    (+ 0.5  )}$ & $69.7  \,\,{\color{BrickRed}    (+ 0.2  )}$ & $77.1  \,\,{\color{BrickRed}    (+ 0.2  )}$ & $72.7  \,\,{\color{BrickRed}    (+ 1.0  )}$\\
                               & ece & $48.0  \,\,{\color{ForestGreen} (- 0.4  )}$ & $52.2  \,\,{\color{ForestGreen} (- 0.2  )}$ & $67.4  \,\,{\color{ForestGreen} (- 0.2  )}$ & $55.0  \,\,{\color{BrickRed}    (+ 1.6  )}$\\
\midrule
\multirow{2}{*}{ImageNet-R}    & err & $79.8  \,\,{\color{BrickRed}    (+ 0.4  )}$ & $78.7  \,\,{\color{BrickRed}    (+ 0.3  )}$ & $87.4  \,\,{\color{gray}        (- 0.0  )}$ & $81.4  \,\,{\color{BrickRed}    (+ 0.7  )}$\\
                               & ece & $56.3  \,\,{\color{ForestGreen} (- 1.0  )}$ & $53.1  \,\,{\color{gray}        (- 0.0  )}$ & $74.9  \,\,{\color{gray}        (- 0.0  )}$ & $54.5  \,\,{\color{BrickRed}    (+ 2.9  )}$\\
\midrule
\multirow{2}{*}{NINCO}         & err & $77.2  \,\,{\color{ForestGreen} (- 2.2  )}$ & $67.6  \,\,{\color{ForestGreen} (- 1.4  )}$ & $30.8  \,\,{\color{ForestGreen} (- 0.4  )}$ & $73.3  \,\,{\color{ForestGreen} (- 5.1  )}$\\
                               & ece & $40.3  \,\,{\color{ForestGreen} (- 3.3  )}$ & $35.4  \,\,{\color{ForestGreen} (- 2.4  )}$ & $18.6  \,\,{\color{ForestGreen} (- 1.5  )}$ & $35.1  \,\,{\color{ForestGreen} (- 4.1  )}$\\
\midrule
\multirow{2}{*}{SSB-Hard}      & err & $84.8  \,\,{\color{ForestGreen} (- 1.7  )}$ & $83.2  \,\,{\color{ForestGreen} (- 1.1  )}$ & $47.2  \,\,{\color{gray}        (- 0.0  )}$ & $87.1  \,\,{\color{ForestGreen} (- 2.0  )}$\\
                               & ece & $51.8  \,\,{\color{ForestGreen} (- 2.4  )}$ & $50.3  \,\,{\color{ForestGreen} (- 1.6  )}$ & $33.1  \,\,{\color{ForestGreen} (- 0.9  )}$ & $49.9  \,\,{\color{ForestGreen} (- 1.7  )}$\\
\midrule
\multirow{2}{*}{iNaturalist}   & err & $51.8  \,\,{\color{ForestGreen} (- 3.5  )}$ & $15.9  \,\,{\color{ForestGreen} (- 0.2  )}$ & $16.5  \,\,{\color{ForestGreen} (- 2.2  )}$ & $58.5  \,\,{\color{ForestGreen} (- 7.4  )}$\\
                               & ece & $22.6  \,\,{\color{ForestGreen} (- 5.3  )}$ & $8.9   \,\,{\color{ForestGreen} (- 1.3  )}$ & $7.3   \,\,{\color{ForestGreen} (- 2.0  )}$ & $19.6  \,\,{\color{ForestGreen} (- 5.0  )}$\\
\midrule
\multirow{2}{*}{Texture}       & err & $52.9  \,\,{\color{ForestGreen} (- 2.9  )}$ & $16.3  \,\,{\color{BrickRed}    (+ 0.3  )}$ & $28.0  \,\,{\color{ForestGreen} (- 3.1  )}$ & $10.5  \,\,{\color{ForestGreen} (- 1.2  )}$\\
                               & ece & $29.8  \,\,{\color{ForestGreen} (- 4.1  )}$ & $11.1  \,\,{\color{ForestGreen} (- 0.7  )}$ & $14.6  \,\,{\color{ForestGreen} (- 2.7  )}$ & $6.0   \,\,{\color{ForestGreen} (- 1.2  )}$\\
\midrule
\multirow{2}{*}{OpenImage-O}   & err & $58.6  \,\,{\color{ForestGreen} (- 3.3  )}$ & $34.6  \,\,{\color{ForestGreen} (- 1.3  )}$ & $21.5  \,\,{\color{ForestGreen} (- 1.9  )}$ & $55.3  \,\,{\color{ForestGreen} (- 5.9  )}$\\
                               & ece & $28.6  \,\,{\color{ForestGreen} (- 5.0  )}$ & $17.5  \,\,{\color{ForestGreen} (- 2.4  )}$ & $11.1  \,\,{\color{ForestGreen} (- 2.0  )}$ & $21.9  \,\,{\color{ForestGreen} (- 4.4  )}$\\
\bottomrule
\end{NiceTabular}
\end{center}
\caption{Classification errors (err) and expected calibration errors (ece) for reject-or-classify across a variety of benchmarks and uncertainty estimates. In parenthesis, we show the metric improvements (in {\color{ForestGreen} green}) or deterioriations (in {\color{BrickRed} red}) from using unified uncertainty calibration. Row color indicates the type of benchmark: \sqboxblack{white} training distribution, \sqboxblack{gray!10} in-domain covariate shift, \sqboxblack{yellow!10} near out-of-distribution, \sqboxblack{orange!10} far out-of-distribution.}\label{table:main}
\end{table}

\paragraph{Results}
Table~\ref{table:main} shows err/ece metrics for RC, for all the considered benchmarks and epistemic uncertainty estimates.
In parenthesis, we show the improvements (in {\color{ForestGreen} green}) or deteriorations (in {\color{BrickRed} red}) brought about by replacing RC by our proposed \utwoc{}.
Error-bars are absent because there is no randomness involved in our experimental protocol---the splits ImageNet-va and ImageNet-te were computed once and set in stone for all runs.
As we can see, \utwoc{} brings improvements in both test classification accuracy and calibration error in most experiments.
When \utwoc{} deteriorates results, it does so with a small effect.
Figures~\ref{fig:main:iid}~and~\ref{fig:main:ood} shows the calibrated epistemic-aleatoric uncertainty space, covering the entire lower-triangle of values---in contrast, RC could only cover two crowded vertical bars at the two extremes of epistemic uncertainty.
Appendix~\ref{sec:more_exps} shows additional experiments on \emph{linear} \utwoc{} (showcasing the importance of calibrating nonlinearly), as well as other neural network architectures, such as ResNet152 and ViT-32-B.

\section{Discussion}
\label{sec:discussion}

We close our exposition by offering some food for thought. 
First, the problem of unknown-unknowns (feature myopia) remains a major challenge. If color is irrelevant to a shape classification problem, should we be uncertain on how to classify known shapes of unseen colors?  If so, general-purpose representations from self-supervised learning may help. However, as famously put by~\citet{goodman}, no representation can be aware of the combinatorially-explosive amount of ways in which two examples may be similar or dissimilar.
Therefore, we will always remain blind to most features, suggesting the impossibility of uncertainty estimation without strong assumptions. 
%
Also related is the issue of adversarial examples---for any trained machine, adversarial examples target exactly those features that the machine is blind to! Therefore, it is likely that adversarial examples will always exist~\citep{dataset_imageneto}.

Second, the relabeling and non-linear calibration processes in the proposed \utwoc{} are more flexible than the simple thresholding step in RC. In applications where abstention is less hazardous than misclassifying, could it be beneficial to explicitly relabel confident in-domain mistakes in the validation set as $y=c+1$?

%
%

Third, commonly-used deep ReLU networks are famous for becoming overly confident as we move far away from the training data~\citep{limits}.
Should we redesign cross-entropy losses to avoid extreme logits?
Some alternatives to tame the confidence of neural networks include gradient starvation~\citep{gradient_starvation}, logit normalization~\citep{logit_normalization}, and mixing or smoothing labels~\citep{mixup}.
Should we redefine $u(x) := u(x) / \|x\|$?
Can we design simple unit tests for epistemic uncertainty estimates?

Looking forward, we would like to investigate the prospects that large language models~\citep[LLMs]{gpt} bring to our discussion about uncertainty.
What does framing the learning problem as next-token prediction, and the emerging capability to learn in-context, signify for the problem of estimating uncertainty? 
Can we aggregate uncertainty token-by-token, over the prediction of a sequence, as to guide the machine away from hallucination and other violations of factuality?

\subsection*{Acknowledgements} We thank Chuan Guo and Mark Tygert for helpful discussions and feedback.

\clearpage
\newpage
\bibliographystyle{plainnat}
\begin{small}
\bibliography{bibliography}
\end{small}

\clearpage
\newpage
\appendix
\section{Appendix}

\subsection{Additional experimental results}
\label{sec:more_exps}
\begin{table}[h!]
\begin{center}
\begin{NiceTabular}{llrrrr}
\CodeBefore
\rectanglecolor{white}{2-1}{5-6}
\rectanglecolor{gray!10}{6-1}{11-6}
\rectanglecolor{yellow!10}{12-1}{15-6}
\rectanglecolor{orange!10}{16-1}{24-6}
\Body
\toprule
& & \textbf{MaxLogit} & \textbf{ASH} & \textbf{Mahalanobis} & \textbf{KNN}\\
\midrule
\multirow{2}{*}{ImageNet-va}   & err & $25.1  \,\,{\color{BrickRed}    (+ 0.2  )}$ & $25.6  \,\,{\color{ForestGreen} (- 0.3  )}$ & $24.9  \,\,{\color{BrickRed}    (+ 0.3  )}$ & $26.7  \,\,{\color{ForestGreen} (- 0.7  )}$\\
                               & ece & $7.0   \,\,{\color{ForestGreen} (- 5.1  )}$ & $7.1   \,\,{\color{ForestGreen} (- 5.4  )}$ & $7.7   \,\,{\color{ForestGreen} (- 2.2  )}$ & $7.2   \,\,{\color{ForestGreen} (- 4.7  )}$\\
\midrule
\multirow{2}{*}{ImageNet-te}   & err & $25.2  \,\,{\color{BrickRed}    (+ 0.3  )}$ & $25.8  \,\,{\color{ForestGreen} (- 0.3  )}$ & $34.1  \,\,{\color{ForestGreen} (- 2.4  )}$ & $27.4  \,\,{\color{ForestGreen} (- 1.0  )}$\\
                               & ece & $6.2   \,\,{\color{ForestGreen} (- 4.5  )}$ & $6.6   \,\,{\color{ForestGreen} (- 5.1  )}$ & $21.4  \,\,{\color{ForestGreen} (- 7.0  )}$ & $7.3   \,\,{\color{ForestGreen} (- 4.5  )}$\\
\midrule
\multirow{2}{*}{ImageNet-v2}   & err & $38.7  \,\,{\color{BrickRed}    (+ 0.5  )}$ & $39.0  \,\,{\color{BrickRed}    (+ 0.2  )}$ & $49.8  \,\,{\color{ForestGreen} (- 3.0  )}$ & $40.3  \,\,{\color{ForestGreen} (- 0.2  )}$\\
                               & ece & $14.5  \,\,{\color{ForestGreen} (- 7.6  )}$ & $13.5  \,\,{\color{ForestGreen} (- 6.4  )}$ & $35.9  \,\,{\color{ForestGreen} (- 8.6  )}$ & $12.0  \,\,{\color{ForestGreen} (- 2.4  )}$\\
\midrule
\multirow{2}{*}{ImageNet-C}    & err & $67.7  \,\,{\color{BrickRed}    (+ 0.6  )}$ & $69.7  \,\,{\color{ForestGreen} (- 1.7  )}$ & $77.1  \,\,{\color{ForestGreen} (- 0.5  )}$ & $72.7  \,\,{\color{ForestGreen} (- 1.4  )}$\\
                               & ece & $48.0  \,\,{\color{ForestGreen} (- 26.2 )}$ & $52.2  \,\,{\color{ForestGreen} (- 29.8 )}$ & $67.4  \,\,{\color{ForestGreen} (- 3.6  )}$ & $55.0  \,\,{\color{ForestGreen} (- 15.2 )}$\\
\midrule
\multirow{2}{*}{ImageNet-R}    & err & $79.8  \,\,{\color{BrickRed}    (+ 0.5  )}$ & $78.7  \,\,{\color{BrickRed}    (+ 1.5  )}$ & $87.4  \,\,{\color{ForestGreen} (- 0.5  )}$ & $81.4  \,\,{\color{BrickRed}    (+ 0.7  )}$\\
                               & ece & $56.3  \,\,{\color{ForestGreen} (- 30.1 )}$ & $53.1  \,\,{\color{ForestGreen} (- 26.3 )}$ & $74.9  \,\,{\color{ForestGreen} (- 4.0  )}$ & $54.5  \,\,{\color{ForestGreen} (- 13.5 )}$\\
\midrule
\multirow{2}{*}{NINCO}         & err & $77.2  \,\,{\color{ForestGreen} (- 2.7  )}$ & $67.6  \,\,{\color{BrickRed}    (+ 7.7  )}$ & $30.8  \,\,{\color{BrickRed}    (+ 1.6  )}$ & $73.3  \,\,{\color{ForestGreen} (- 9.6  )}$\\
                               & ece & $40.3  \,\,{\color{ForestGreen} (- 8.5  )}$ & $35.4  \,\,{\color{ForestGreen} (- 3.8  )}$ & $18.6  \,\,{\color{ForestGreen} (- 6.5  )}$ & $35.1  \,\,{\color{ForestGreen} (- 14.5 )}$\\
\midrule
\multirow{2}{*}{SSB-Hard}      & err & $84.8  \,\,{\color{ForestGreen} (- 2.1  )}$ & $83.2  \,\,{\color{gray}        (- 0.0  )}$ & $47.2  \,\,{\color{BrickRed}    (+ 2.6  )}$ & $87.1  \,\,{\color{ForestGreen} (- 9.3  )}$\\
                               & ece & $51.8  \,\,{\color{ForestGreen} (- 8.4  )}$ & $50.3  \,\,{\color{ForestGreen} (- 7.2  )}$ & $33.1  \,\,{\color{ForestGreen} (- 3.8  )}$ & $49.9  \,\,{\color{ForestGreen} (- 11.6 )}$\\
\midrule
\multirow{2}{*}{iNaturalist}   & err & $51.8  \,\,{\color{ForestGreen} (- 4.2  )}$ & $15.9  \,\,{\color{BrickRed}    (+ 34.4 )}$ & $16.5  \,\,{\color{ForestGreen} (- 4.9  )}$ & $58.5  \,\,{\color{ForestGreen} (- 23.6 )}$\\
                               & ece & $22.6  \,\,{\color{BrickRed}    (+ 9.6  )}$ & $8.9   \,\,{\color{BrickRed}    (+ 20.8 )}$ & $7.3   \,\,{\color{ForestGreen} (- 6.3  )}$ & $19.6  \,\,{\color{BrickRed}    (+ 0.4  )}$\\
\midrule
\multirow{2}{*}{Texture}       & err & $52.9  \,\,{\color{ForestGreen} (- 3.4  )}$ & $16.3  \,\,{\color{BrickRed}    (+ 35.5 )}$ & $28.0  \,\,{\color{ForestGreen} (- 7.5  )}$ & $10.5  \,\,{\color{BrickRed}    (+ 16.2 )}$\\
                               & ece & $29.8  \,\,{\color{BrickRed}    (+ 0.5  )}$ & $11.1  \,\,{\color{BrickRed}    (+ 17.2 )}$ & $14.6  \,\,{\color{ForestGreen} (- 8.1  )}$ & $6.0   \,\,{\color{BrickRed}    (+ 6.1  )}$\\
\midrule
\multirow{2}{*}{OpenImage-O}   & err & $58.6  \,\,{\color{ForestGreen} (- 4.0  )}$ & $34.6  \,\,{\color{BrickRed}    (+ 22.0 )}$ & $21.5  \,\,{\color{ForestGreen} (- 4.1  )}$ & $55.3  \,\,{\color{ForestGreen} (- 14.7 )}$\\
                               & ece & $28.6  \,\,{\color{ForestGreen} (- 1.8  )}$ & $17.5  \,\,{\color{BrickRed}    (+ 7.4  )}$ & $11.1  \,\,{\color{ForestGreen} (- 8.0  )}$ & $21.9  \,\,{\color{ForestGreen} (- 6.6  )}$\\
\end{NiceTabular}
\end{center}
\label{table:linear}
\caption{Reject-or-classify versus \underline{linear} unified uncertainty calibration on ResNet50.}
\end{table}

\begin{table}
\begin{center}
\begin{NiceTabular}{llrrrr}
\CodeBefore
\rectanglecolor{white}{2-1}{5-6}
\rectanglecolor{gray!10}{6-1}{11-6}
\rectanglecolor{yellow!10}{12-1}{15-6}
\rectanglecolor{orange!10}{16-1}{24-6}
\Body
\toprule
& & \textbf{MaxLogit} & \textbf{ASH} & \textbf{Mahalanobis} & \textbf{KNN}\\
\midrule
\multirow{2}{*}{ImageNet-va}   & err & $23.2  \,\,{\color{BrickRed}    (+ 0.2  )}$ & $23.8  \,\,{\color{BrickRed}    (+ 0.1  )}$ & $23.0  \,\,{\color{gray}        (- 0.0  )}$ & $24.7  \,\,{\color{ForestGreen} (- 0.2  )}$\\
                               & ece & $6.6   \,\,{\color{ForestGreen} (- 0.6  )}$ & $6.8   \,\,{\color{ForestGreen} (- 0.4  )}$ & $7.2   \,\,{\color{ForestGreen} (- 0.6  )}$ & $6.8   \,\,{\color{ForestGreen} (- 0.8  )}$\\
\midrule
\multirow{2}{*}{ImageNet-te}   & err & $23.0  \,\,{\color{BrickRed}    (+ 0.2  )}$ & $23.6  \,\,{\color{gray}        (- 0.0  )}$ & $30.7  \,\,{\color{ForestGreen} (- 0.4  )}$ & $24.9  \,\,{\color{ForestGreen} (- 0.1  )}$\\
                               & ece & $6.4   \,\,{\color{ForestGreen} (- 0.5  )}$ & $6.6   \,\,{\color{ForestGreen} (- 0.5  )}$ & $19.0  \,\,{\color{ForestGreen} (- 1.4  )}$ & $7.3   \,\,{\color{ForestGreen} (- 0.6  )}$\\
\midrule
\multirow{2}{*}{ImageNet-v2}   & err & $35.2  \,\,{\color{BrickRed}    (+ 0.3  )}$ & $35.4  \,\,{\color{BrickRed}    (+ 0.2  )}$ & $45.6  \,\,{\color{ForestGreen} (- 0.5  )}$ & $36.7  \,\,{\color{gray}        (- 0.0  )}$\\
                               & ece & $13.8  \,\,{\color{ForestGreen} (- 0.3  )}$ & $12.4  \,\,{\color{ForestGreen} (- 0.2  )}$ & $32.5  \,\,{\color{ForestGreen} (- 1.4  )}$ & $11.4  \,\,{\color{gray}        (- 0.0  )}$\\
\midrule
\multirow{2}{*}{ImageNet-C}    & err & $62.1  \,\,{\color{BrickRed}    (+ 0.4  )}$ & $64.0  \,\,{\color{BrickRed}    (+ 0.3  )}$ & $70.4  \,\,{\color{BrickRed}    (+ 0.1  )}$ & $65.8  \,\,{\color{BrickRed}    (+ 1.0  )}$\\
                               & ece & $44.6  \,\,{\color{ForestGreen} (- 0.9  )}$ & $46.9  \,\,{\color{ForestGreen} (- 0.3  )}$ & $60.5  \,\,{\color{ForestGreen} (- 0.6  )}$ & $47.3  \,\,{\color{BrickRed}    (+ 1.6  )}$\\
\midrule
\multirow{2}{*}{ImageNet-R}    & err & $76.5  \,\,{\color{BrickRed}    (+ 0.4  )}$ & $75.2  \,\,{\color{BrickRed}    (+ 0.4  )}$ & $83.2  \,\,{\color{gray}        (- 0.0  )}$ & $77.8  \,\,{\color{BrickRed}    (+ 0.8  )}$\\
                               & ece & $56.6  \,\,{\color{ForestGreen} (- 1.3  )}$ & $51.6  \,\,{\color{gray}        (- 0.0  )}$ & $73.2  \,\,{\color{ForestGreen} (- 0.3  )}$ & $56.1  \,\,{\color{BrickRed}    (+ 2.4  )}$\\
\midrule
\multirow{2}{*}{NINCO}         & err & $72.1  \,\,{\color{ForestGreen} (- 2.1  )}$ & $63.8  \,\,{\color{ForestGreen} (- 1.4  )}$ & $29.9  \,\,{\color{ForestGreen} (- 0.3  )}$ & $69.5  \,\,{\color{ForestGreen} (- 5.6  )}$\\
                               & ece & $39.8  \,\,{\color{ForestGreen} (- 3.4  )}$ & $34.8  \,\,{\color{ForestGreen} (- 2.3  )}$ & $18.8  \,\,{\color{ForestGreen} (- 1.4  )}$ & $34.4  \,\,{\color{ForestGreen} (- 4.4  )}$\\
\midrule
\multirow{2}{*}{SSB-Hard}      & err & $80.2  \,\,{\color{ForestGreen} (- 1.6  )}$ & $80.6  \,\,{\color{ForestGreen} (- 1.5  )}$ & $51.0  \,\,{\color{ForestGreen} (- 0.3  )}$ & $86.5  \,\,{\color{ForestGreen} (- 2.4  )}$\\
                               & ece & $51.5  \,\,{\color{ForestGreen} (- 2.8  )}$ & $50.3  \,\,{\color{ForestGreen} (- 2.1  )}$ & $36.1  \,\,{\color{ForestGreen} (- 1.2  )}$ & $50.8  \,\,{\color{ForestGreen} (- 2.0  )}$\\
\midrule
\multirow{2}{*}{iNaturalist}   & err & $46.5  \,\,{\color{ForestGreen} (- 2.8  )}$ & $15.4  \,\,{\color{ForestGreen} (- 0.3  )}$ & $15.6  \,\,{\color{ForestGreen} (- 2.2  )}$ & $47.3  \,\,{\color{ForestGreen} (- 7.4  )}$\\
                               & ece & $22.3  \,\,{\color{ForestGreen} (- 4.8  )}$ & $9.1   \,\,{\color{ForestGreen} (- 1.4  )}$ & $7.4   \,\,{\color{ForestGreen} (- 1.9  )}$ & $18.4  \,\,{\color{ForestGreen} (- 5.3  )}$\\
\midrule
\multirow{2}{*}{Texture}       & err & $43.4  \,\,{\color{ForestGreen} (- 2.6  )}$ & $12.1  \,\,{\color{BrickRed}    (+ 0.6  )}$ & $25.4  \,\,{\color{ForestGreen} (- 3.1  )}$ & $9.7   \,\,{\color{ForestGreen} (- 1.1  )}$\\
                               & ece & $25.9  \,\,{\color{ForestGreen} (- 4.3  )}$ & $8.5   \,\,{\color{ForestGreen} (- 0.3  )}$ & $14.2  \,\,{\color{ForestGreen} (- 2.6  )}$ & $6.0   \,\,{\color{ForestGreen} (- 1.0  )}$\\
\midrule
\multirow{2}{*}{OpenImage-O}   & err & $51.8  \,\,{\color{ForestGreen} (- 2.6  )}$ & $31.6  \,\,{\color{ForestGreen} (- 1.2  )}$ & $19.6  \,\,{\color{ForestGreen} (- 1.9  )}$ & $49.2  \,\,{\color{ForestGreen} (- 6.0  )}$\\
                               & ece & $27.0  \,\,{\color{ForestGreen} (- 4.6  )}$ & $16.8  \,\,{\color{ForestGreen} (- 2.2  )}$ & $10.7  \,\,{\color{ForestGreen} (- 2.0  )}$ & $20.2  \,\,{\color{ForestGreen} (- 4.6  )}$\\
\bottomrule
\end{NiceTabular}
\end{center}
\label{table:152}
\caption{Reject-or-classify versus unified uncertainty calibration on \underline{ResNet152}.}
\end{table}

\begin{table}
\begin{center}
\begin{NiceTabular}{llrrrr}
\CodeBefore
\rectanglecolor{white}{2-1}{5-6}
\rectanglecolor{gray!10}{6-1}{11-6}
\rectanglecolor{yellow!10}{12-1}{15-6}
\rectanglecolor{orange!10}{16-1}{24-6}
\Body
\toprule
& & \textbf{MaxLogit} & \textbf{ASH} & \textbf{Mahalanobis} & \textbf{KNN}\\
\midrule
\multirow{2}{*}{ImageNet-va}   & err & $20.3  \,\,{\color{BrickRed}    (+ 0.1  )}$ & $22.8  \,\,{\color{ForestGreen} (- 3.6  )}$ & $19.8  \,\,{\color{gray}        (- 0.0  )}$ & $20.8  \,\,{\color{gray}        (- 0.0  )}$\\
                               & ece & $8.4   \,\,{\color{ForestGreen} (- 0.4  )}$ & $8.5   \,\,{\color{ForestGreen} (- 5.0  )}$ & $7.4   \,\,{\color{ForestGreen} (- 0.3  )}$ & $8.1   \,\,{\color{ForestGreen} (- 0.5  )}$\\
\midrule
\multirow{2}{*}{ImageNet-te}   & err & $20.3  \,\,{\color{BrickRed}    (+ 0.1  )}$ & $22.9  \,\,{\color{ForestGreen} (- 3.8  )}$ & $22.1  \,\,{\color{gray}        (- 0.0  )}$ & $21.1  \,\,{\color{gray}        (- 0.0  )}$\\
                               & ece & $8.4   \,\,{\color{ForestGreen} (- 0.4  )}$ & $8.8   \,\,{\color{ForestGreen} (- 5.4  )}$ & $11.2  \,\,{\color{ForestGreen} (- 0.5  )}$ & $8.6   \,\,{\color{ForestGreen} (- 0.6  )}$\\
\midrule
\multirow{2}{*}{ImageNet-v2}   & err & $32.3  \,\,{\color{BrickRed}    (+ 0.1  )}$ & $34.0  \,\,{\color{ForestGreen} (- 3.5  )}$ & $35.2  \,\,{\color{gray}        (- 0.0  )}$ & $33.1  \,\,{\color{gray}        (- 0.0  )}$\\
                               & ece & $18.1  \,\,{\color{ForestGreen} (- 0.7  )}$ & $14.9  \,\,{\color{ForestGreen} (- 8.6  )}$ & $23.7  \,\,{\color{ForestGreen} (- 0.9  )}$ & $17.8  \,\,{\color{ForestGreen} (- 1.1  )}$\\
\midrule
\multirow{2}{*}{ImageNet-C}    & err & $50.7  \,\,{\color{BrickRed}    (+ 0.4  )}$ & $46.9  \,\,{\color{ForestGreen} (- 2.2  )}$ & $52.6  \,\,{\color{BrickRed}    (+ 0.1  )}$ & $49.8  \,\,{\color{BrickRed}    (+ 0.6  )}$\\
                               & ece & $37.6  \,\,{\color{ForestGreen} (- 1.4  )}$ & $15.2  \,\,{\color{ForestGreen} (- 8.8  )}$ & $42.5  \,\,{\color{ForestGreen} (- 1.2  )}$ & $34.9  \,\,{\color{ForestGreen} (- 1.1  )}$\\
\midrule
\multirow{2}{*}{ImageNet-R}    & err & $72.9  \,\,{\color{BrickRed}    (+ 0.2  )}$ & $71.7  \,\,{\color{ForestGreen} (- 1.0  )}$ & $76.4  \,\,{\color{gray}        (- 0.0  )}$ & $73.8  \,\,{\color{BrickRed}    (+ 0.2  )}$\\
                               & ece & $59.6  \,\,{\color{ForestGreen} (- 1.7  )}$ & $34.5  \,\,{\color{ForestGreen} (- 6.5  )}$ & $67.9  \,\,{\color{ForestGreen} (- 1.6  )}$ & $62.3  \,\,{\color{ForestGreen} (- 2.4  )}$\\
\midrule
\multirow{2}{*}{NINCO}         & err & $73.9  \,\,{\color{ForestGreen} (- 0.8  )}$ & $93.8  \,\,{\color{ForestGreen} (- 1.0  )}$ & $45.3  \,\,{\color{BrickRed}    (+ 0.1  )}$ & $75.2  \,\,{\color{ForestGreen} (- 2.7  )}$\\
                               & ece & $54.1  \,\,{\color{ForestGreen} (- 3.0  )}$ & $58.4  \,\,{\color{ForestGreen} (- 2.1  )}$ & $35.8  \,\,{\color{ForestGreen} (- 1.5  )}$ & $51.9  \,\,{\color{ForestGreen} (- 4.8  )}$\\
\midrule
\multirow{2}{*}{SSB-Hard}      & err & $85.8  \,\,{\color{ForestGreen} (- 0.8  )}$ & $93.0  \,\,{\color{BrickRed}    (+ 3.6  )}$ & $63.8  \,\,{\color{BrickRed}    (+ 0.1  )}$ & $87.4  \,\,{\color{ForestGreen} (- 1.2  )}$\\
                               & ece & $66.8  \,\,{\color{ForestGreen} (- 2.2  )}$ & $66.4  \,\,{\color{gray}        (- 0.0  )}$ & $52.7  \,\,{\color{ForestGreen} (- 1.0  )}$ & $65.9  \,\,{\color{ForestGreen} (- 2.3  )}$\\
\midrule
\multirow{2}{*}{iNaturalist}   & err & $52.4  \,\,{\color{ForestGreen} (- 1.4  )}$ & $91.2  \,\,{\color{ForestGreen} (- 7.1  )}$ & $12.5  \,\,{\color{BrickRed}    (+ 0.3  )}$ & $55.8  \,\,{\color{ForestGreen} (- 4.5  )}$\\
                               & ece & $35.5  \,\,{\color{ForestGreen} (- 4.4  )}$ & $44.2  \,\,{\color{ForestGreen} (- 0.5  )}$ & $10.6  \,\,{\color{ForestGreen} (- 0.5  )}$ & $33.4  \,\,{\color{ForestGreen} (- 7.5  )}$\\
\midrule
\multirow{2}{*}{Texture}       & err & $53.8  \,\,{\color{ForestGreen} (- 1.3  )}$ & $92.2  \,\,{\color{ForestGreen} (- 9.9  )}$ & $39.9  \,\,{\color{ForestGreen} (- 1.1  )}$ & $46.3  \,\,{\color{ForestGreen} (- 3.5  )}$\\
                               & ece & $39.0  \,\,{\color{ForestGreen} (- 4.0  )}$ & $48.8  \,\,{\color{ForestGreen} (- 2.0  )}$ & $29.0  \,\,{\color{ForestGreen} (- 2.8  )}$ & $30.8  \,\,{\color{ForestGreen} (- 6.5  )}$\\
\midrule
\multirow{2}{*}{OpenImage-O}   & err & $58.9  \,\,{\color{ForestGreen} (- 1.6  )}$ & $91.0  \,\,{\color{ForestGreen} (- 4.2  )}$ & $28.3  \,\,{\color{ForestGreen} (- 0.3  )}$ & $59.3  \,\,{\color{ForestGreen} (- 3.6  )}$\\
                               & ece & $41.3  \,\,{\color{ForestGreen} (- 4.3  )}$ & $48.7  \,\,{\color{ForestGreen} (- 1.7  )}$ & $21.9  \,\,{\color{ForestGreen} (- 1.7  )}$ & $37.0  \,\,{\color{ForestGreen} (- 6.4  )}$\\
\midrule
\multirow{2}{*}{ImageNet-O}    & err & $89.1  \,\,{\color{ForestGreen} (- 0.7  )}$ & $93.8  \,\,{\color{BrickRed}    (+ 3.7  )}$ & $79.4  \,\,{\color{gray}        (- 0.0  )}$ & $80.0  \,\,{\color{ForestGreen} (- 0.7  )}$\\
                               & ece & $75.4  \,\,{\color{ForestGreen} (- 1.8  )}$ & $74.3  \,\,{\color{BrickRed}    (+ 0.7  )}$ & $68.3  \,\,{\color{ForestGreen} (- 1.6  )}$ & $67.3  \,\,{\color{ForestGreen} (- 3.4  )}$\\
\bottomrule
\end{NiceTabular}
\end{center}
\label{table:vit}
\caption{Reject-or-classify versus unified uncertainty calibration on \underline{ViT-B-16}.}
\end{table}

\clearpage
\newpage
\subsubsection{Proofs}

\begin{proof}(Of Lemma~\ref{lem:error})
To prove the first part, we observe that in the regions $A$ and $D$, both RC and \utwoc\ make the same hard prediction. In contrast, in $C$, RC predicts class $c+1$ while \utwoc\ reverts to the prediction made by $f_{\tau}(x)$, and the converse happens in $B$. This means that when data is drawn from $P^{\text{out}}$, $\err_{P^{\text{out}}}(RC) = P^{\text{out}}(A) + P^{\text{out}}(C)$, while $\err_{P^{\text{out}}}(\utwoc) = P^{\text{out}}(A) + P^{\text{out}}(B)$. The first equation follows.

When $x$ is drawn from $P^{\text{in}}$, RC always predicts class $c+1$ incorrectly in $C$ and $D$, and also predicts incorrectly in $A$ and $B$ if $f_{\tau}(x)$ is incorrect. Therefore, 
\begin{eqnarray*}
    \err_{P^{\text{in}}}(RC) & = & P^{\text{in}}(C) + P^{\text{in}}(D) + P^{\text{in}}(A) \cdot \err_{P^{\text{in}}}(h_{f_{\tau}}(x) | x \in A) \\
    && + P^{\text{in}}(B) \cdot \err_{P^{\text{in}}}(h_{f_{\tau}}(x) | x \in B) \\
    \err_{P^{\text{in}}}(\utwoc) & = &  P^{\text{in}}(D) + P^{\text{in}}(B) + P^{\text{in}}(A) \cdot \err_{P^{\text{in}}}(h_{f_{\tau}}(x) | x \in A) \\
    && + P^{\text{in}}(C) \cdot \err_{P^{\text{in}}}(h_{f_{\tau}}(x) | x \in C)
\end{eqnarray*} 
The second equation follows. 
\end{proof}

\begin{proof}(Of Lemma~\ref{lem:nll})
To prove the first part of the lemma, we observe that when $x \sim P^{\text{out}}$, the correct label is $c+1$, while \utwoc\ predicts the probability vector $s(f_{\tau}(x)_1, \ldots, f_{\tau}(x)_c, \tau(u(x)))$. The first two equations follow from the definition. 

For RC, when $x \sim P^{\text{out}}$, the true class is $c+1$, and the predicted probability is $(0, \ldots, 0, 1)$ in $C \cup D$, and $s(f_{\tau}(x)_1, \ldots, f_{\tau}(x)_c, 0)$ in $A \cup B$. This leads to a negative log-likelihood of zero for any $x \in C \cup D$ and infinity for any $x \in A \cup B$. 

When $x \sim P^{\text{in}}$, RC predicts $(0, \ldots, 0, 1)$ in $C \cup D$ and $s(f_{\tau}(x)_1, \ldots, f_{\tau}(x)_c, 0)$, the same vector as $f_{\tau}$, in $A \cup B$. The last equation follows. 
\end{proof}

\subsubsection{Results on ece}

\begin{lemma}
If $x \sim P^{\text{out}}$, then the ece values of RC in $A$ to $D$ are given by:
\begin{eqnarray*}
    \ece_{P^{\text{out}}| A }(RC) & = &  \E_{(x, y) \sim P^{\text{out}} |  A} \left[ \max_i \frac{e^{(f_{\tau}(x))_i}}{\sum_{j=1}^{c} e^{(f_{\tau}(x))_j}} \right] \\
    \ece_{P^{\text{out}} | B}(RC) & = &  \E_{(x, y) \sim P^{\text{out}} | B} \left[ \max_i \frac{e^{(f_{\tau}(x))_i}}{\sum_{j=1}^{c} e^{(f_{\tau}(x))_j}} \right] \\
    \ece_{P^{\text{out}} | C }(RC) & = & 0\\
    \ece_{P^{\text{out}} | D }(RC) & = & 0
\end{eqnarray*}
Similarly, the ece values of \utwoc\ in the regions $A$ to $D$ are given by:
\begin{eqnarray*}
    \ece_{P^{\text{out}} | A}(\utwoc) & = &  \E_{(x, y) \sim P^{\text{out}} | A} \left[ \max_i \frac{e^{(f_{\tau}(x))_i}}{\sum_{i=1}^{c} e^{(f_{\tau}(x))_i} + e^{\tau(u(x))}}, \right]\\
    \ece_{P^{\text{out}} | B }(\utwoc) & = &  \E_{(x, y) \sim P^{\text{out}} | B} \left[ \left(1 - \frac{e^{\tau(u(x))}}{\sum_{i=1}^{c} e^{(f_{\tau}(x))_i} + e^{\tau(u(x))}}\right), \right] \\
    \ece_{P^{\text{out}} | C}(\utwoc) & = & \E_{(x, y) \sim P^{\text{out}} | C} \left[ \max_i \frac{e^{(f_{\tau}(x))_i}}{\sum_{i=1}^{c} e^{(f_{\tau}(x))_i} + e^{\tau(u(x))}}, \right]\\
    \ece_{P^{\text{out}} | D}(\utwoc) & = & \E_{(x, y) \sim P^{\text{out}} |  D} \left[ \left(1 - \frac{e^{\tau(u(x))}}{\sum_{i=1}^{c} e^{(f_{\tau}(x))_i} + e^{\tau(u(x))}}\right), \right]
\end{eqnarray*}
\end{lemma}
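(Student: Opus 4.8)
The plan is to exploit the structural simplification that arises for out-domain data. Since every test example drawn from $P^{\text{out}}$ carries the true label $y = c+1$, and since each of the regions $A$ through $D$ deterministically fixes whether a given method abstains or classifies in-domain, the correctness indicator $\mathbf{1}[h_{f^\star}(x) = y]$ is \emph{constant} across each region. This collapses the conditional accuracy $\Pr(h_{f^\star}(x) = y \mid \pi_{f^\star}(x) = p)$ appearing in the definition of ece~\eqref{eq:ece} to either $0$ or $1$ within each region, so that the inner absolute deviation reduces to either $p$ or $1-p$; the per-region ece then becomes simply the expectation, over that region, of the prediction confidence or of its complement.

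First I would tabulate, for each method and each region, whether the issued prediction is an in-domain label or the abstention label $c+1$. Recalling that RC abstains precisely when $u(x) \geq \theta$ and \utwoc{} abstains precisely when $\max_i f_\tau(x)_i \leq \tau(u(x))$, RC classifies in-domain on $A \cup B$ and abstains on $C \cup D$, whereas \utwoc{} classifies in-domain on $A \cup C$ and abstains on $B \cup D$. Against the out-domain ground truth $y = c+1$, every in-domain prediction is incorrect (region accuracy $0$) and every abstention is correct (region accuracy $1$).

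Next I would substitute the explicit confidence values. For RC the extended softmax~\eqref{eq:rc} places total confidence on abstention, so on $C$ and $D$ the confidence equals $1$ and the calibration error $|1-1|$ vanishes, yielding the two zero entries; on $A$ and $B$ the confidence is the ordinary in-domain maximum softmax $\max_i e^{(f_\tau(x))_i}/\sum_{j=1}^{c} e^{(f_\tau(x))_j}$, and since the region accuracy is $0$ the per-example error equals exactly this confidence. For \utwoc{} the confidence is read off the $(c+1)$-class softmax~\eqref{eq:dc}: on the classify regions $A$ and $C$ it is $\max_i e^{(f_\tau(x))_i}/(\sum_{i=1}^{c} e^{(f_\tau(x))_i} + e^{\tau(u(x))})$ paired with accuracy $0$, while on the abstain regions $B$ and $D$ it is $e^{\tau(u(x))}/(\sum_{i=1}^{c} e^{(f_\tau(x))_i} + e^{\tau(u(x))})$ paired with accuracy $1$, so the error is one minus this quantity. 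Taking conditional expectations over each region reproduces the eight displayed identities.

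The one delicate point, and the step I would treat most carefully, is justifying the collapse of $\Pr(h_{f^\star}(x) = y \mid \pi_{f^\star}(x) = p)$ to a constant within each region: this rests on conditioning on $P^{\text{out}}$ to fix the label together with the fact that each region is defined exactly so the accept/reject decision is constant on it, which together make the conditioning on $\pi_{f^\star}(x) = p$ irrelevant to the accuracy. Once this reduction is in place, the remainder is the mechanical substitution of the two softmax confidence expressions and a case split over the four regions, so I do not anticipate any genuine analytic obstacle beyond bookkeeping the regions against the two methods.
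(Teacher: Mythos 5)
Your proposal is correct and follows essentially the same route as the paper's own proof: a region-by-region case split in which the out-domain label is always $c+1$, so in-domain predictions carry true probability $0$ and abstentions true probability $1$, and the per-region ece reduces to the expectation of each method's confidence (or its complement) read off the extended softmax vectors \eqref{eq:rc} and \eqref{eq:dc}. Your explicit justification that the conditional accuracy collapses to a constant on each region is a point the paper passes over with ``by definition of ece,'' but it is the same underlying argument.
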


\begin{proof}
Suppose that $x \sim P^{\text{out}}$. Then, for any $x \in C \cup D$, the probability vector predicted by RC is $(0, \ldots, 0, 1)$, which is always equal to the true generating probability vector. Hence, the ece values over $C$ and $D$ are zero.

For any $x \in A$, the predicted probability vector is $s((f_{\tau}(x))_1, \ldots, (f_{\tau}(x))_c, 0)$, the maximum coordinate of which is a label in $\{1, \ldots, c \}$. At this coordinate, the predicted probability is $\max_i \frac{e^{(f_{\tau}(x))_i}}{\sum_{j=1}^{c} e^{(f_{\tau}(x))_j}}$, while the actual probability is zero (since $x \sim P^{\text{out}}$). By definition of ece, this means that the ece in the region $A$ is $\int_{x \in A} \max_i \frac{e^{(f_{\tau}(x))_i}}{\sum_{j=1}^{c} e^{(f_{\tau}(x))_j}} P^{\text{out}}(x) dx$, from which plus conditioning, the first equation follows. A similar analysis follows for $x \in B$.

Now let us look at a similar analysis for DC. Suppose $x \in A$; then the predicted probability vector for DC is $s((f_{\tau}(x))_1, \ldots, (f_{\tau}(x))_c, \tau(u(x)))$, the maximum coordinate of which is a label in $\{1, \ldots, c\}$. At this coordinate, the predicted probability is $\max_i \frac{e^{(f_{\tau}(x))_i}}{\sum_{j=1}^{c} e^{(f_{\tau}(x))_j} + e^{\tau(u(x))}}$, while the actual generating probability is zero. By definition of ece, the ece of DC in A is thus $\int_{x \in A} \max_i \frac{e^{(f_{\tau}(x))_i}}{\sum_{j=1}^{c} e^{(f_{\tau}(x))_j} + e^{\tau(u(x))}} P^{\text{out}}(x) dx$, from which the lemma follows. A similar analysis applies to $x \in C$.

For $x \in B$, the predicted probability vector for DC is $ 
s((f_{\tau}(x))_1, \ldots, (f_{\tau}(x))_c, \tau(u(x))) $, the maximum coordinate of which is $c+1$. Since $x$ is drawn from $P^{\text{out}}$, for this coordinate the actual probability is always $1$. Therefore, the ece will be $\int_{x \in B} (1 - \frac{e^{\tau(u(x))}}{ \sum_{j=1}^{c} e^{(f_{\tau}(x)_j} + e^{\tau(u(x))}}) P^{\text{out}}(x) dx$, from which the lemma follows. A similar analysis holds for $x \in D$.
\end{proof}

\begin{lemma}
If $x \sim P^{\text{in}}$, then the ece values of RC in the regions $A$ to $D$ are given by:
\begin{eqnarray*}
\ece_{P^{\text{in}}|A}(RC) & = &  \ece_{P^{\text{in}}|A}(f_{\tau})\\
    \ece_{P^{\text{in}}|B}(RC) & = &  \ece_{P^{\text{in}}|B}(f_{\tau}) \\
    \ece_{P^{\text{in}}|C}(RC) & = & 1\\
    \ece_{P^{\text{in}}|D}(RC) & = & 1
\end{eqnarray*}
Similarly, the ece values of \utwoc\ in $B$ and $D$ are:
\begin{eqnarray*}
    \ece_{P^{\text{in}}|B}(\utwoc) & = & \E_{(x, y) \sim P^{\text{in}} | B} \left[ \frac{e^{\tau(u(x))}}{\sum_{i=1}^{c} e^{(f_{\tau}(x))_i} + e^{\tau(u(x))}}\right] \\ 
    \ece_{P^{\text{in}}|D}(\utwoc) & = & \E_{(x, y) \sim P^{\text{in}} |  D}\left[ \frac{e^{\tau(u(x))}}{\sum_{i=1}^{c} e^{(f_{\tau}(x))_i} + e^{\tau(u(x))}} \right]
\end{eqnarray*}
\end{lemma}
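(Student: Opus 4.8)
The plan is to mirror the structure of the preceding $P^{\text{out}}$ lemma, tracking on which coordinate each method's extended softmax vector places its argmax, and exploiting that for $x \sim P^{\text{in}}$ the true label always lies in $\{1,\dots,c\}$, so class $c+1$ is never correct. First I would record, for each region, the extended softmax vector each method produces: by~\eqref{eq:rc}, RC outputs $s(f_\tau(x)_1,\dots,f_\tau(x)_c,0)$ in $A\cup B$ (where $u(x)<\theta$) and the one-hot vector $(0,\dots,0,1)$ in $C\cup D$ (where $u(x)\ge\theta$); by~\eqref{eq:dc}, \utwoc{} outputs $s(f_\tau(x)_1,\dots,f_\tau(x)_c,\tau(u(x)))$ throughout. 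With these in hand, each of the six claimed identities reduces to reading off the hard label and the confidence and asking whether the prediction can ever be correct.

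For RC in $A$ and $B$ the appended zero coordinate never beats the in-domain softmax (whose maximum is at least $1/c>0$), so both the hard label $h_{RC}(x)$ and the confidence $\pi_{RC}(x)$ coincide exactly with those of $f_\tau$; conditioning the definition~\eqref{eq:ece} on the region therefore returns $\ece_{P^{\text{in}}|A}(f_\tau)$ and $\ece_{P^{\text{in}}|B}(f_\tau)$ unchanged. For RC in $C$ and $D$ the predicted vector is $(0,\dots,0,1)$, so $h_{RC}(x)=c+1$ and $\pi_{RC}(x)=1$; since $y\in\{1,\dots,c\}$ the prediction is wrong for every such $x$, hence the conditional accuracy at the only attained confidence $p=1$ is $0$ and the calibration gap is $|0-1|=1$, giving $\ece_{P^{\text{in}}|C}(RC)=\ece_{P^{\text{in}}|D}(RC)=1$.

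For \utwoc{} in $B$ and $D$ rejection means $\tau(u(x))\ge\max_i f_\tau(x)_i$, so the argmax of the extended softmax is coordinate $c+1$ and $\pi_{\utwoc}(x)=e^{\tau(u(x))}/\big(\sum_{i=1}^c e^{f_\tau(x)_i}+e^{\tau(u(x))}\big)$. Again $y\ne c+1$ under $P^{\text{in}}$, so the prediction is always incorrect, the per-example calibration gap equals this confidence, and averaging over the region under $P^{\text{in}}$ yields the two stated expectations. This is the same bookkeeping as the $P^{\text{out}}$ lemma, with the roles of the ``correct'' and ``incorrect'' coordinates swapped: there the in-domain coordinates were never correct, here the abstention coordinate is never correct.

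The step needing the most care is the reduction from the population $\ece$~\eqref{eq:ece}, which integrates the conditional accuracy $\Pr(h_{f^\star}(x)=y\mid\pi_{f^\star}(x)=p)$ against $p\sim U[0,1]$, to the per-example form $\E_x[\,\lvert \mathbf{1}[\text{correct}]-\pi(x)\rvert\,]$ used above. The observation that legitimizes it in regions $C,D$ (for RC) and $B,D$ (for \utwoc{}) is that the prediction is \emph{deterministically} incorrect there, so the conditional accuracy is identically $0$ at every confidence the region attains; the gap at confidence $p$ is then simply $p$, and integrating against the induced law of $\pi(x)$ recovers $\E_x[\pi(x)]$. In regions $A,B$ for RC no reduction is needed, since the confidence profile is identical to that of $f_\tau$ and the conditional $\ece$ is inherited verbatim.
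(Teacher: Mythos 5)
Your proposal is correct and follows essentially the same route as the paper's proof: identify the extended softmax vector each method outputs in each region, note that RC's prediction in $A\cup B$ coincides exactly with that of $f_{\tau}$ (so the conditional ece is inherited), and that in the remaining cases the argmax coordinate is deterministically wrong under $P^{\text{in}}$, so the per-example calibration gap equals the predicted confidence, whose regional expectation gives the stated formulas. The only difference is that you spell out the reduction from the $U[0,1]$-integral definition of ece in~\eqref{eq:ece} to the per-example gap $\E_x[\pi(x)]$, a step the paper uses implicitly without comment.
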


\begin{proof}
Suppose $x \sim P^{\text{in}}$. For RC, the predicted probability vector for an $x$ in $C \cup D$ is $(0, \ldots, 0, 1)$, the maximum coordinate of which is $c+1$. In reality, as $x$ is drawn from $P^{\text{in}}$, the actual probability of this coordinate is zero. The ece is therefore always $1$.

Instead if $x \in A \cup B$, the probability vector predicted by RC is $s((f_{\tau}(x))_1, \ldots, (f_{\tau}(x))_c, 0)$, which is the same as that predicted by $h$. The ece restricted to these regions is thus the same as the ece of $h$ in these regions. 

In the region $B \cup D$, DC predicts class $c+1$ with probability vector $s((f_{\tau}(x))_1, \ldots, (f_{\tau}(x))_c, \tau(u(x)))$, while the actual probability of class $c+1$ is zero. Therefore, the ece in $B$ is $\int_{x \in B} \frac{e^{\tau(u(x))}}{\sum_{j=1}^{c} e^{(f_{\tau}(x))_c} + e^{\tau(u(x))}} P^{\text{in}}(x) dx$, from which the lemma follows. A similar analysis holds for $x \in D$. 
\end{proof}

Observe that in $A$ and $C$, the ece of \utwoc{} is hard to analyze -- since it involves a predicted probability vector that is a damped version of $f_{\tau}$. The ece might degrade if $f_{\tau}$ is perfectly or close to perfectly calibrated, or might even improve if $f_{\tau}$ itself is over-confident.

\end{document}